\documentclass[10pt,twocolumn,letterpaper]{article}

\usepackage[letterpaper,top=0.72in,bottom=0.78in,left=0.72in,right=0.72in]{geometry}
\usepackage[T1]{fontenc}
\usepackage[utf8]{inputenc}
\usepackage{times}
\usepackage[hyphens]{url}
\usepackage{graphicx}
\usepackage{natbib}
\usepackage[font=small,labelfont=bf]{caption}
\usepackage{booktabs}
\usepackage{multirow}
\usepackage{array}
\usepackage[table]{xcolor}
\usepackage{amssymb}
\usepackage{mathtools}
\usepackage{amsthm}
\usepackage{thmtools}
\usepackage{algorithm}
\usepackage{algpseudocode}
\usepackage{microtype}
\usepackage[hidelinks,hypertexnames=false]{hyperref}
\usepackage[capitalize,nameinlink,noabbrev]{cleveref}


\usepackage{amsmath,amsfonts,bm}

\def\eqref#1{equation~\ref{#1}}

\def\1{\bm{1}}

\def\eps{{\epsilon}}

\DeclareMathAlphabet{\mathsfit}{\encodingdefault}{\sfdefault}{m}{sl}
\SetMathAlphabet{\mathsfit}{bold}{\encodingdefault}{\sfdefault}{bx}{n}

\newcommand{\E}{\mathbb{E}}

\newcommand{\R}{\mathbb{R}}

\newcommand{\softmax}{\mathrm{softmax}}

\newcommand{\Var}{\mathrm{Var}}

\newcommand{\Cov}{\mathrm{Cov}}

\DeclareMathOperator*{\argmax}{arg\,max}

\newcommand{\vobs}{\bm{v}}                 
\newcommand{\zspan}{\bm{s}}                
\newcommand{\ek}{\bm{e}}                   
\newcommand{\enull}{\bm{e}_{\varnothing}}  
\newcommand{\epsth}{\eps_\theta}           
\newcommand{\abar}{\bar{\alpha}}
\newcommand{\score}{\mathrm{score}}
\newcommand{\err}{\mathrm{err}}
\newcommand{\zvspan}{\bm{z}^{\mathrm{v}}}  
\newcommand{\SNR}{\mathrm{SNR}}
\newcommand{\Ldiff}{\mathcal{L}_{\mathrm{diff}}}
\newcommand{\Lico}{\mathcal{L}_{\mathrm{ico}}}
\newcommand{\Lclf}{\mathcal{L}_{\mathrm{clf}}}
\newcommand{\Ltask}{\mathcal{L}_{\mathrm{task}}}

\declaretheorem[name=Proposition]{proposition}
\declaretheorem[name=Assumption,style=definition]{assumption}
\declaretheorem[name=Lemma]{lemma}
\declaretheorem[name=Remark,style=remark]{remark}

\crefname{figure}{Figure}{Figures}
\Crefname{figure}{Figure}{Figures}
\crefname{table}{Table}{Tables}
\Crefname{table}{Table}{Tables}
\crefname{section}{Section}{Sections}
\Crefname{section}{Section}{Sections}
\crefname{equation}{Equation}{Equations}
\Crefname{equation}{Equation}{Equations}
\crefname{definition}{Definition}{Definitions}
\Crefname{definition}{Definition}{Definitions}
\crefname{proposition}{Proposition}{Propositions}
\Crefname{proposition}{Proposition}{Propositions}

\title{DiffImaginE: Imagine to Verify Entity Types with Diffusion}
\author{%
  \begin{minipage}{0.96\textwidth}
    \centering
    \normalsize
    Feng Zhang\textsuperscript{1,*},
    Feiyu Han\textsuperscript{2,*},
    Rongxin Yang\textsuperscript{6,3,*},
    Yang Liu\textsuperscript{3},
    Yancheng Chen\textsuperscript{2}\\[0.15em]
    Rui Wang\textsuperscript{4},
    Yingguang Yang\textsuperscript{5},
    Tian Xueyun\textsuperscript{2},
    Chongyang Zhang\textsuperscript{6},
    Hao Zheng\textsuperscript{6}\\[0.15em]
    Xu Kefu\textsuperscript{3},
    Congjing Ran\textsuperscript{7},
    Fuhai Chen\textsuperscript{1},
    Bin Chong\textsuperscript{3,\textdagger}\\[0.55em]
    \footnotesize
    \textsuperscript{1}Fuzhou University; 
    \textsuperscript{2}Chinese Academy of Sciences; 
    \textsuperscript{3}Peking University; 
    \textsuperscript{4}Alibaba Group\\
    \textsuperscript{5}University of Science and Technology of China; 
    \textsuperscript{6}Fullive Innovation (Beijing) AI Technology Co., Ltd.; 
    \textsuperscript{7}Wuhan University\\[0.35em]
    \textsuperscript{*}Equal contribution.\quad
    \textsuperscript{\textdagger}Corresponding author:
    \href{mailto:chongbin@pku.edu.cn}{chongbin@pku.edu.cn}
  \end{minipage}
}

\date{}

\begin{document}

\maketitle

\begin{abstract}
Multimodal named entity recognition (MNER) determines, for each candidate span, whether an entity-type hypothesis is supported by the joint textual and visual evidence. Existing imagine-and-compare verifiers typically map each $(\text{span},\text{type})$ pair to a single predicted visual feature and compare it with the observed image representation. Such deterministic imagination compresses the diverse visual realisations of an entity type into one prototype, making the verifier brittle when the same type appears through substantially different visual cues. Moreover, the resulting compatibility score lacks a probabilistic interpretation and provides only indirect supervision for rejecting confusable type hypotheses.

We propose \textbf{DiffImaginE}, which formulates MNER type verification as conditional latent diffusion inference. Given span-localised visual evidence, a type-conditioned denoiser predicts the noise injected into its standardised latent representation. The resulting denoising error provides an ELBO-consistent surrogate for the type-conditional negative log-likelihood, enabling different type hypotheses to be compared according to how well they explain the observed evidence. DiffImaginE preserves a standard multimodal encoder stack and replaces only the deterministic verifier with a classifier-free-guided diffusion scorer trained using Min-SNR weighting. To bridge generative likelihood estimation and discriminative prediction, we directly supervise per-type diffusion scores as classification logits, learn to aggregate evidence across noise levels, and use antithetic sampling to reduce Monte-Carlo comparison variance. Our theoretical analysis shows that classifier-free guidance sharpens the induced type posterior and characterises when antithetic pairing yields lower variance at equal denoiser cost. Experiments on Twitter-2015 and Twitter-2017 show that DiffImaginE consistently improves over a matched deterministic ImaginE control under the same encoder, auxiliary objectives, and evaluation protocol, with further support from ablations and paired significance tests.
\end{abstract}

\section{Introduction}
\label{sec:intro}

Named entity recognition (NER) on social-media text is challenging because posts are often short, noisy, and lexically ambiguous. Multimodal NER (MNER) uses accompanying images to resolve such ambiguity. For example, ``Jordan dropped 40'' may refer to a person, a brand, or another entity, while the associated image can reveal whether ``Jordan'' denotes an athlete, an organisation, or a miscellaneous entity~\citep{moon2018multimodal,zhang2018adaptive,lu2018visual}. A widely adopted formulation therefore enumerates candidate spans and verifies whether each proposed entity type is compatible with the joint text--image context~\citep{yu2020umt,chen2022hvpnet,wang2022ita}. Under this formulation, verification is central: the encoders provide contextualised multimodal representations, but the final decision depends on the compatibility assigned to each $(\text{span},\text{type})$ hypothesis.

Prior work commonly performs verification through \emph{imagination}. Given a span and candidate type, a predictor synthesises the visual feature expected under that hypothesis, and a comparator measures its agreement with the observed evidence~\citep{chen2022hvpnet,xu2022maf}. Although intuitive, this deterministic formulation compresses the diverse visual realisations of a type into a single imagined point. A \texttt{PER} mention may appear as a frontal face, profile, distant figure, or action scene, while an \texttt{ORG} mention may be supported by a logo, storefront, uniform, or advertisement. A single prototype cannot adequately capture such intra-type variation, particularly for heterogeneous and visually overlapping categories such as \texttt{MISC}.

Moreover, deterministic compatibility scores provide no explicit likelihood interpretation. They measure resemblance to one imagined feature, but not how plausibly the observed evidence can be explained under each competing type. Negative hypotheses therefore receive mainly indirect contrastive pressure, allowing related types to obtain similarly high scores. A more suitable verifier should instead ask: \emph{under which type hypothesis is the observed span-localised visual evidence most plausible?}

Diffusion models offer a natural mechanism for this comparison. A conditional denoiser trained across multiple corruption levels models a distribution of possible observations rather than a single representative point. Its expected class-conditional denoising error can be interpreted as a negative variational bound on the conditional log-likelihood, enabling classification by comparing reconstruction errors across hypotheses~\citep{li2023diffusionclassifier,clark2023texttoimage}. However, existing diffusion classifiers mainly operate on full images. Applying them to MNER is non-trivial because the scored object is a low-dimensional, span-conditioned cross-modal latent whose scale changes with the jointly trained encoder, while the generative score must ultimately support discriminative span classification and rejection of the non-entity hypothesis.

We propose \textbf{DiffImaginE}, a conditional latent diffusion verifier for MNER. It preserves the standard text encoder, vision encoder, and span--visual interaction modules, while replacing deterministic imagination with type-conditioned diffusion scoring. For each candidate span, cross-attention extracts span-localised visual evidence, which is standardised and corrupted with Gaussian noise. A shared denoiser predicts the injected noise under each candidate type, and the resulting Min-SNR-weighted errors define type-specific verification scores. A NULL-conditioned branch further enables classifier-free guidance~\citep{ho2022cfg,hang2023minsnr}. DiffImaginE thus evaluates each hypothesis according to how well its conditional denoising process explains the same observation.

Because a generative likelihood surrogate is not necessarily an optimal discriminative classifier, we further adapt the scorer in three ways. We directly supervise per-type scores as classification logits, learn timestep aggregation weights to emphasise discriminative noise levels, and use antithetic noise pairs to reduce Monte-Carlo comparison variance when the odd component dominates. The resulting scores are fused with the original multimodal representations by the final entity classifier.

DiffImaginE changes only the verification mechanism. Our matched \textbf{ImaginE} control uses the same encoders, span-localisation modules, auxiliary objectives, classifier head, optimisation procedure, and decoding protocol, but retains deterministic imagine-and-compare verification. This controlled comparison isolates the effect of replacing single-point imagination with distributional diffusion scoring (\cref{sec:baselines}).

Our contributions are summarised as follows:
\begin{itemize}
    \item \textbf{Diffusion-based type verification.}
    We formulate MNER verification as conditional latent diffusion inference, replacing single-point visual imagination with an ELBO-based distributional scorer.

    \item \textbf{Adaptation to multimodal span latents.}
    We combine latent standardisation, span- and type-conditioned denoising, classifier-free guidance, and Min-SNR weighting for low-dimensional span-localised evidence.

    \item \textbf{Discriminative and variance-reduced scoring.}
    We supervise diffusion scores as type logits, learn their aggregation across timesteps, and derive the condition under which antithetic estimation reduces comparison variance (\cref{sec:disc,sec:theory}).

    \item \textbf{Controlled evaluation.}
    Experiments on Twitter-2015 and Twitter-2017 show consistent improvements over a matched deterministic verifier, supported by ablations and paired tests.
\end{itemize}

\begin{figure}[t]
    \centering
    \includegraphics[width=\linewidth]{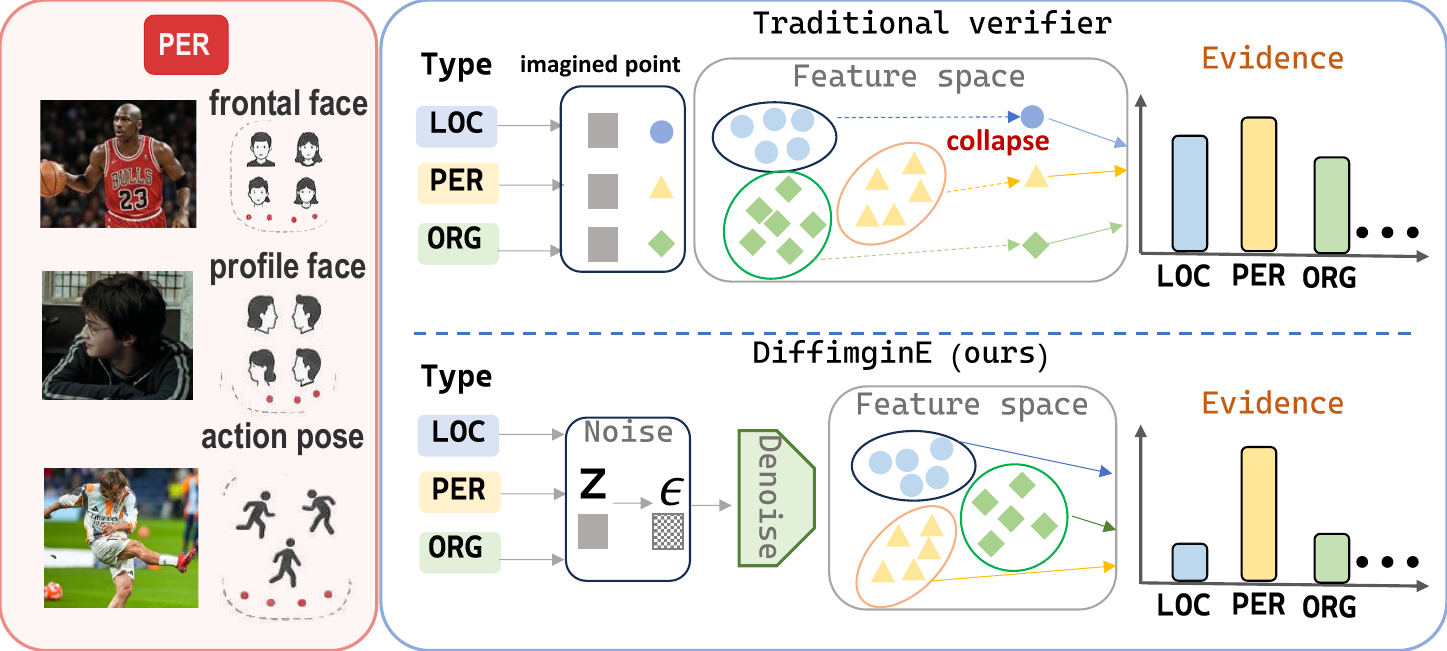}
    \caption{DiffImaginE replaces single-point imagine-and-compare verification with type-conditioned denoising on span-localised visual evidence: each type is scored by its denoising error, which better separates confusable hypotheses when visual evidence for a type is diverse.}
    \label{fig:motivation}

\end{figure}

\section{Preliminaries}
\label{sec:prelim}

We introduce the span-level MNER formulation, the visual evidence scored by DiffImaginE, and the diffusion-classifier estimator underlying our verifier.

\subsection{Multimodal NER and Span-Level Type Verification}
\label{sec:mner}

An MNER instance consists of a token sequence $\bm{x}=(x_1,\dots,x_L)$ and an image $\bm{m}$. The task is to identify all entity spans and assign each a type from
$\mathcal{Y}=\{\texttt{PER},\texttt{LOC},\texttt{ORG},\texttt{MISC}\}$.
Following the enumerate-and-verify formulation, we enumerate candidate spans
$s=(i,j)$ satisfying $1\leq i\leq j\leq L$ and $j-i+1\leq W$, and classify each into one of
$K=|\mathcal{Y}|+1=5$ labels, including the non-entity class \texttt{O}. Thus, the verifier jointly determines whether a candidate is an entity and, if so, its type.

\subsection{Span-Localised Visual Evidence}
\label{sec:evidence}

A text encoder (RoBERTa~\citep{liu2019roberta}) produces contextual token states, while a vision encoder (CLIP-ViT~\citep{radford2021clip,dosovitskiy2021vit}) produces global and patch-level image features. Both are projected into a shared $d$-dimensional space. Boundary-aware pooling yields a span representation $\zspan\in\R^d$, which attends over the image patches to produce span-localised visual evidence
$\zvspan_s\in\R^d$~\citep{vaswani2017attention,chen2022hvpnet}. DiffImaginE scores type hypotheses against $\zvspan_s$; all upstream encoders and interaction modules are shared with the deterministic baseline.

\subsection{Diffusion Models and the Diffusion Classifier}
\label{sec:diffusion}

We use a variance-preserving DDPM~\citep{ho2020ddpm,nichol2021improved} with $T$ steps and cosine schedule $\{\abar_t\}_{t=0}^{T-1}$. Given a clean latent $\vobs\in\R^d$, the forward process is
\begin{equation}
\vobs_t
=
\sqrt{\abar_t}\,\vobs
+
\sqrt{1-\abar_t}\,\eps,
\qquad
\eps\sim\mathcal{N}(0,I),
\label{eq:qsample}
\end{equation}
where a conditional denoiser $\epsth(\vobs_t,\bm{c},t)$ predicts the injected noise under condition $\bm{c}$. The corresponding signal-to-noise ratio is
$\SNR(t)=\abar_t/(1-\abar_t)$.

A diffusion classifier~\citep{li2023diffusionclassifier,clark2023texttoimage} evaluates class hypothesis $k$ through its expected denoising error:
\begin{equation}
\err_k
=
\E_{t,\eps}\!\left[
w(t)
\left\|
\eps-
\epsth\!\left(
\sqrt{\abar_t}\vobs+
\sqrt{1-\abar_t}\eps,
\bm{c}_k,t
\right)
\right\|_2^2
\right].
\label{eq:err}
\end{equation}
With an appropriate $w(t)$, $\err_k$ corresponds to the negative ELBO of
$\log p_\theta(\vobs\mid\bm{c}_k)$ up to a class-independent constant, so lower error indicates a more likely class~\citep{kingma2021vdm,ho2020ddpm}.

DiffImaginE sets $\vobs=\zvspan_s$ and uses the composite condition
$\bm{c}_k=(\zspan,\ek_k)$, yielding the denoiser
$\epsth(\vobs_t,\zspan,\ek_k,t)$. The resulting error measures how well type $k$ explains the visual evidence associated with span $s$. We estimate the expectation by Monte-Carlo sampling over $(t,\eps)$, writing $\err_{k,n}$ for an individual estimate and $\err_k$ for the aggregate.

\section{The DiffImaginE Model}
\label{sec:method}

\subsection{Overview}
\label{sec:overview}
DiffImaginE replaces the deterministic imagine-and-compare verifier with the conditional diffusion scorer of \cref{eq:err}. For each valid span, span--visual cross-attention produces $\zvspan_s$, which is standardised (\cref{sec:latentnorm}), scored under all $K$ type hypotheses (\cref{sec:scorer}), and fused with textual and visual representations for final classification (\cref{fig:overview}). Reverse imagination and the visual-relevance gate are inherited unchanged from ImaginE and detailed in the technical appendix. Padding and invalid spans are masked from all diffusion forwards.

\begin{figure}[t]
    \centering
    \includegraphics[width=\linewidth]{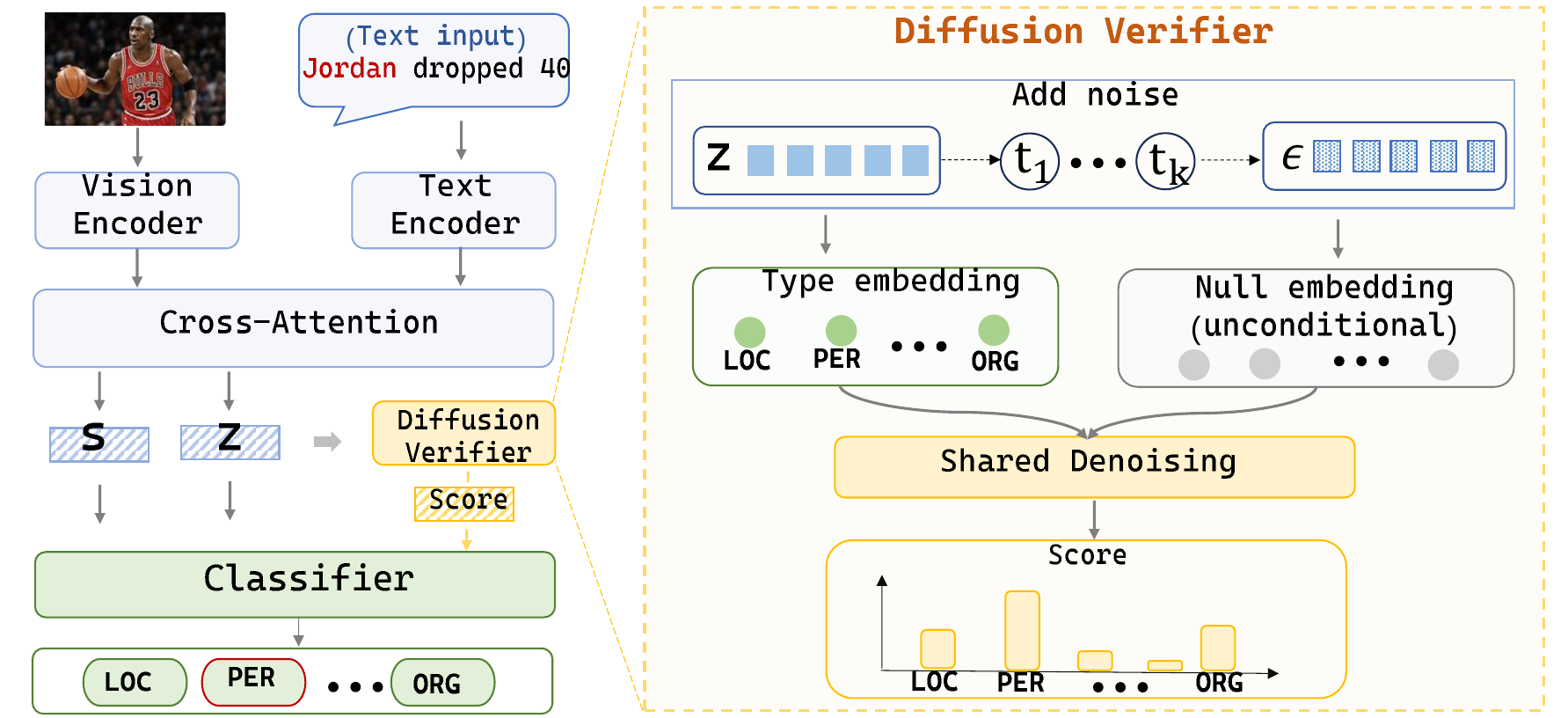}
    \caption{DiffImaginE pipeline: cross-attention yields span representation $s$ and span-localised visual evidence $z$; the diffusion verifier scores each type by type-conditioned denoising (with a NULL branch for classifier-free guidance), and the classifier fuses these scores with multimodal features for the final prediction.}
    \label{fig:overview}
    \vspace{-1.5em}
\end{figure}

\subsection{Latent Standardisation}
\label{sec:latentnorm}
The signal scale of $\zvspan_{s}$ is set by the encoder, not by the diffusion schedule, so a cosine schedule calibrated for unit-variance data is mismatched on a raw, low-dimensional cross-modal latent. Before diffusion we estimate a per-dimension mean $\bm{\mu}$ and standard deviation $\bm{\sigma}$ of $\zvspan_{s}$ over a calibration set of training batches and operate the diffusion process on the standardised latent $\vobs = (\zvspan_{s} - \bm{\mu}) \oslash \bm{\sigma}$, mirroring the latent scaling of latent diffusion models~\citep{rombach2022ldm}. Because the encoder drifts during joint training, $(\bm{\mu}, \bm{\sigma})$ are re-estimated after the denoiser warmup and periodically thereafter, so the SNR curve tracks the live encoder. Unlike image diffusion, the scored object here is a $d$-dimensional span summary rather than a spatial field, so miscalibrated scale directly distorts the relative noise levels at which types are compared. We treat standardisation as a prerequisite for stable scoring and isolate its effect in the \texttt{no\_latent\_norm} ablation.

\subsection{Type-Conditioned Denoiser}
\label{sec:denoiser}
The denoiser $\epsth(\vobs_t, \zspan, \ek_k, t)$ is an adaptive-LayerNorm (AdaLN) Transformer/MLP block~\citep{peebles2023dit}: the timestep embedding and the type embedding $\ek_k$ are mapped to per-layer scale/shift parameters that modulate the normalised activations, while the span representation $\zspan$ enters as an input feature. The type-embedding table has $K + 1$ rows: $K$ rows for the supervised classes (including \texttt{O}) and one extra NULL row $\enull$, the unconditional embedding used for classifier-free guidance (\cref{sec:scorer}). The NULL row is a guidance device, distinct from the \texttt{O} class, which is a genuine type hypothesis the scorer must rank. The AdaLN projection is half-zero-initialised so the type condition has a small but non-zero effect from the first step, avoiding a long regime in which the conditioning signal is silenced.

\subsection{Classifier-Free-Guided ELBO Scoring}
\label{sec:scorer}
For span $s$ with standardised evidence $\vobs$, the per-type error of \cref{eq:err} is estimated by Monte-Carlo over timesteps and noise. Following classifier-free guidance~\citep{ho2022cfg,dhariwal2021classifier}, we also evaluate the denoiser under the NULL condition $\enull$ and form a guided score with guidance scale $g \ge 0$:
\begin{equation}
\err_{\varnothing} = \E_{t,\eps}\!\bigl[ w(t)\,\|\eps - \epsth(\vobs_t, \zspan, \enull, t)\|_2^2\bigr],
\label{eq:errnull}
\end{equation}
\begin{equation}
\begin{aligned}
\score_k^{(g)} &= -\bigl(\err_k - g\,(\err_{\varnothing} - \err_k)\bigr)\\
&= -(1 + g)\,\err_k + g\,\err_{\varnothing}.
\end{aligned}
\label{eq:score}
\end{equation}
The same noise sample $\eps$ is shared across all $K{+}1$ hypotheses at each $(t,s)$, so comparisons differ only through the denoiser response. Training draws timesteps by stratified sampling over $M$ strata of $[0,T)$; evaluation stacks a fixed set of $N$ timesteps in $[t_{\mathrm{lo}},t_{\mathrm{hi}}]$ into one batched forward. The resulting errors $\err^{(g)}_{k,n}=(1+g)\err_{k,n}-g\err_{\varnothing,n}$ are aggregated as $\score_k=-\sum_n w(t_n/T)\err^{(g)}_{k,n}$. Because $\err_{\varnothing}$ is type-independent, $g$ acts as a posterior-sharpening factor in the idealised ELBO regime (\cref{prop:elbo}).

\subsection{Min-SNR Weighting and the Denoising Objective}
\label{sec:minsnr}
The denoiser is trained with a denoising score-matching loss weighted per timestep by the Min-SNR-$\gamma$ rule~\citep{hang2023minsnr}, which equalises the gradient contribution across noise levels by clipping the SNR weight at $\gamma$:
\begin{equation}
\begin{aligned}
\Ldiff &= \E_{t, \eps}\!\bigl[\, w_\gamma(t)\,\|\eps - \epsth(\vobs_t, \zspan, \ek_{y}, t)\|_2^2\,\bigr],\\
w_\gamma(t) &= \frac{\min(\SNR(t), \gamma)}{\SNR(t)},
\end{aligned}
\label{eq:ldiff}
\end{equation}
where $y$ is the gold type of the span (\texttt{O} is a valid supervised type, so the score on \texttt{O} is well-defined at test time). With probability $p_{\mathrm{cf}}$ the gold condition $\ek_{y}$ is replaced by $\enull$ during training, which fits the unconditional denoiser branch that classifier-free guidance contrasts against in \cref{eq:score}. The classifier scores and $\Ldiff$ are produced by a single fused denoiser forward per Monte-Carlo step.

\subsection{Training Objectives}
\label{sec:objectives}
The verifier and encoder are trained jointly. The primary term is class-weighted cross-entropy $\Ltask$, with a lower weight for \texttt{O} to mitigate span-enumeration imbalance. The diffusion path adds $\Ldiff$ (\cref{eq:ldiff}) and a score-level contrastive loss $\Lico$ over entity spans within the evaluation timestep window. Reverse-imagination alignment, VICReg regularisation~\citep{bardes2022vicreg}, BIO supervision, cross-modal InfoNCE~\citep{oord2018infonce,radford2021clip}, grounding, type ranking, and hard-example objectives are inherited by both DiffImaginE and the deterministic control. Their definitions and weights are deferred to the technical appendix, ensuring that the main comparison isolates the verifier.

\subsection{From a Generative Score to a Discriminative Scorer}
\label{sec:disc}
A bare ELBO score is a generative quantity used as a discriminative feature, which is known to be suboptimal as a classifier. DiffImaginE closes this gap with three independent components.
\paragraph{Likelihood-based classification objective.} We supervise the per-type scores \emph{directly}: treating $\score_k$ as a logit, $\Lclf$ is a cross-entropy over all $K$ types (including \texttt{O}) with a learnable temperature $\tau_{\mathrm{clf}}$,
\begin{equation}
\Lclf = \mathrm{CE}\!\bigl(\softmax_k(\score_k / \tau_{\mathrm{clf}}),\, y\bigr),
\label{eq:lclf}
\end{equation}
which aligns the ordering of the scores with the gold type during training, so the scores are trained as classification logits rather than read off a frozen generative model~\citep{xian2024graphclf}. We parameterise $\tau_{\mathrm{clf}} = \exp(\theta_{\tau})$ to keep it positive, initialised so that $\tau_{\mathrm{clf}} = 0.1$.
\paragraph{Learnable timestep aggregation.} The score-side weighting $w(t)$ of \cref{eq:err} is left unspecified by the ELBO up to the variational choice; we instantiate it as a learned aggregation. Instead of a uniform mean over the active timesteps, a tiny zero-initialised network maps the normalised timestep $t/T$ to a logit, and the per-timestep errors are aggregated with the resulting softmax weights; zero initialisation makes the aggregation identical to a uniform mean at start-up, after which the model up-weights the most discriminative noise levels~\citep{jeong2025compositionality}. This weight is distinct from the Min-SNR loss weight $w_\gamma(t)$ of \cref{eq:ldiff}: the former shapes the evaluation-time score, while $w_\gamma(t)$ acts only on $\Ldiff$.
\paragraph{Antithetic variance reduction.} The Monte-Carlo noise is drawn in same-timestep pairs $(\eps, -\eps)$. Because the type-discriminative part of the score is, to leading order, odd in $\eps$ and shared across types, antithetic pairing lowers the variance of the per-timestep type comparison relative to an i.i.d.\ estimator of equal cost when the odd error component dominates~\citep{jia2026antithetic}; \cref{prop:antithetic} makes the condition precise. Finally, a score LayerNorm with a learnable temperature rescales the $K$ score channels so they are not dwarfed by the $d$-dimensional representations when fused in the classifier.

\subsection{Decoding}
\label{sec:decoding}
At test time, weighted-interval-scheduling dynamic programming selects the maximum-confidence non-overlapping span set in $O(S\log S)$ time, where $S=O(LW)$. A span is accepted only when its best entity logit exceeds the \texttt{O} logit and its diffusion margin $\max_{k\neq\texttt{O}}\score_k-\score_{\texttt{O}}$ exceeds an abstention threshold. The guidance scale and both margins are tuned on the development set and reused unchanged for strict test $F_1$; greedy decoding is retained only as an ablation.

\subsection{Theoretical Analysis}
\label{sec:theory}
We state two results about the scorer; both are proved in the technical appendix. The first relates guided scoring to a tempered posterior under an idealised uniform type prior; in training, the empirical dominance of \texttt{O} is handled by class weighting in $\Ltask$ and the classifier head rather than inside the score layer. The second compares antithetic and i.i.d.\ estimators of the type-difference score.

\begin{assumption}[ELBO-consistent weighting]
\label{ass:elbo}
We adopt the standard diffusion-classifier modelling assumption~\citep{ho2020ddpm,kingma2021vdm,li2023diffusionclassifier}: the timestep weighting $w(t)$ in \cref{eq:err} is the variational weighting under which $\err_k$ equals the negative variational bound on $\log p_\theta(\vobs \mid \zspan, \ek_k)$, and the gap between this bound and the exact log-likelihood, together with the prior and reconstruction terms and the entropy of the forward process, is folded into a constant $C$ independent of the type $k$. We thus treat $\err_k = -\log p_\theta(\vobs \mid \zspan, \ek_k) + C$ as the operative model rather than an exact identity.
\end{assumption}

\begin{proposition}[Guided score is a tempered posterior]
\label{prop:elbo}
Under \cref{ass:elbo} and a uniform prior over the $K$ types, the unguided score $\score_k = -\err_k$ satisfies $\softmax_k(\score_k) = p_\theta(\ek_k \mid \vobs, \zspan)$, so $\argmax_k \score_k$ is the Bayes-optimal type and $\softmax_k(\score_k/\tau)$ is a temperature-$\tau$ posterior estimate. For guided scoring, with any guidance scale $g \ge 0$ and temperature $\tau > 0$, the classifier-free-guided score of \cref{eq:score} satisfies
\begin{equation}
\begin{aligned}
\softmax_k\!\bigl(\score_k^{(g)} / \tau\bigr) &= \softmax_k\!\bigl(\score_k / \tfrac{\tau}{1+g}\bigr)\\
&\propto p_\theta(\ek_k \mid \vobs, \zspan)^{(1+g)/\tau}.
\end{aligned}
\label{eq:cfg-temper}
\end{equation}
That is, guidance and temperature combine into a single effective exponent $(1+g)/\tau$ on the posterior: increasing $g$ (or lowering $\tau$) sharpens the type distribution, while decreasing $g$ (or raising $\tau$) flattens it, with no effect on the $\argmax$. At $\tau = 1$ the guided score is exactly the posterior raised to the power $1 + g$.
\end{proposition}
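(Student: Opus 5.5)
The plan is to reduce everything to Bayes' rule under \cref{ass:elbo}, and then use the shift-invariance of softmax to remove every type-independent quantity.

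\textbf{Unguided case.} By \cref{ass:elbo}, $\score_k=-\err_k=\log p_\theta(\vobs\mid\zspan,\ek_k)-C$, where $C$ does not depend on $k$. With a uniform prior $p(\ek_k\mid\zspan)=1/K$, Bayes' rule gives
\[
p_\theta(\ek_k\mid\vobs,\zspan)=\frac{p_\theta(\vobs\mid\zspan,\ek_k)}{\sum_{j}p_\theta(\vobs\mid\zspan,\ek_j)}.
\]
Exponentiating $\score_k$ multiplies every class by the same factor $e^{-C}$, which cancels in the normalisation. Hence $\softmax_k(\score_k)$ equals this posterior. The argmax statement follows because $\exp$ is monotone. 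The temperature-$\tau$ claim is then just a definition: $\softmax_k(\score_k/\tau)\propto p_\theta(\ek_k\mid\vobs,\zspan)^{1/\tau}$.

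\textbf{Guided case.} I would start from \cref{eq:score}, $\score_k^{(g)}=-(1+g)\err_k+g\,\err_\varnothing$. The term $g\,\err_\varnothing$ does not depend on $k$, so it is a common additive shift of all logits. Dividing by $\tau$ keeps it common, so it drops out of the softmax. This gives
\[
\softmax_k(\score_k^{(g)}/\tau)=\softmax_k\bigl((1+g)\score_k/\tau\bigr)=\softmax_k\bigl(\score_k/\tfrac{\tau}{1+g}\bigr).
\]
Combining this with the unguided identity yields proportionality to $p_\theta(\ek_k\mid\vobs,\zspan)^{(1+g)/\tau}$. Setting $\tau=1$ gives exponent $1+g$.

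\textbf{Remaining claims.} The argmax is unchanged because $(1+g)/\tau>0$ and the map $x\mapsto x^{a}$ with $a>0$ preserves order. For the sharpening and flattening claims, I would show that for a positive vector $p$, the tempered distribution $p^{a}/\sum_j p_j^{a}$ becomes more concentrated as $a$ increases. Concretely, each likelihood ratio $(p_i/p_j)^a$ moves away from $1$ as $a$ grows, so the distribution approaches the argmax as $a\to\infty$ and approaches uniform as $a\to0$. I would state this as a monotone property of likelihood ratios rather than trying to prove an entropy inequality.

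\textbf{Main obstacle.} The only delicate point is bookkeeping. First, the shift $g\,\err_\varnothing$ must be the same across $k$, which holds because it is computed once per span and uses the same shared noise. Second, the assumption must be stated with a type-independent $C$ and the same weighting $w(t)$ for every class. In the Monte-Carlo version the same argument applies verbatim, since $\err_{\varnothing,n}$ is shared across types. The identity is therefore exact for the estimator as well, and only its posterior interpretation depends on \cref{ass:elbo}.
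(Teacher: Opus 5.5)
Your proposal is correct and follows the paper's proof essentially step for step. You use Bayes' rule under the ELBO-consistency assumption, with the type-independent constant cancelling in the softmax, and then drop the type-independent shift $g\,\err_{\varnothing}$ to obtain the logit rescaling by $(1+g)/\tau$. Your likelihood-ratio argument for the sharpening/flattening claims, and your remark that the rescaling identity holds exactly for the Monte-Carlo estimator, make precise two points the paper only asserts.
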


The proof is the diffusion-classifier ELBO identity followed by the observation that $\err_{\varnothing}$ is type-independent and hence cancels inside the softmax, leaving an exact rescaling of the logits by $1 + g$. This reading motivates the dev-tuning grid for $g$ and the choice of $-\err_k$ as the initial logit parameterisation in \cref{eq:lclf}. We next evaluate whether the resulting scorer improves MNER in practice.

\begin{assumption}[Square-integrable, symmetric noise]
\label{ass:noise}
The noise $\eps \sim \mathcal{N}(0, I)$ is symmetric ($\eps \stackrel{d}{=} -\eps$), and for a fixed timestep $t$ and span $s$ the per-sample squared errors $\phi_k(\eps) = \|\epsth(\sqrt{\abar_t}\vobs + \sqrt{1 - \abar_t}\,\eps, \zspan, \ek_k, t) - \eps\|_2^2$ are square-integrable for every $k$.
\end{assumption}

\begin{proposition}[Antithetic variance reduction]
\label{prop:antithetic}
Fix $t$ and a type pair $(k, j)$, and let $f(\eps) = \phi_k(\eps) - \phi_j(\eps)$ be the single-sample type-difference, with even/odd parts $f_e(\eps) = \tfrac12(f(\eps) + f(-\eps))$ and $f_o(\eps) = \tfrac12(f(\eps) - f(-\eps))$. Under \cref{ass:noise}, the antithetic estimator $\widehat{D}^{\,\mathrm{anti}} = \tfrac12(f(\eps) + f(-\eps))$ and the two-sample i.i.d.\ estimator $\widehat{D}^{\,\mathrm{iid}} = \tfrac12(f(\eps^{(1)}) + f(\eps^{(2)}))$ use the same number of denoiser evaluations and are both unbiased for $\E[f] = \err_k - \err_j$, with
\begin{equation}
\begin{aligned}
\Var\bigl(\widehat{D}^{\,\mathrm{anti}}\bigr) &= \Var(f_e),\\
\Var\bigl(\widehat{D}^{\,\mathrm{iid}}\bigr) &= \tfrac12\bigl(\Var(f_e) + \Var(f_o)\bigr).
\end{aligned}
\label{eq:anti-var}
\end{equation}
Hence $\Var(\widehat{D}^{\,\mathrm{anti}}) \le \Var(\widehat{D}^{\,\mathrm{iid}})$ if and only if $\Var(f_e) \le \Var(f_o)$: the antithetic estimator achieves lower variance precisely when the odd component carries at least as much variance as the even one. A first-order expansion of the denoiser about $\eps = 0$ shows the odd part of $f$ is dominated by $-2\langle \eps,\, \bm{a}_k - \bm{a}_j\rangle$, with $\bm{a}_k = \epsth(\sqrt{\abar_t}\vobs, \zspan, \ek_k, t)$. The criterion therefore holds when the inter-type gap $\bm{a}_k - \bm{a}_j$ is large relative to the denoiser's local sensitivity to $\eps$. Proofs and the termwise aggregation argument are in the technical appendix.
\end{proposition}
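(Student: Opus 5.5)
The argument rests on the even/odd decomposition $f = f_e + f_o$ and the symmetry $\eps \stackrel{d}{=} -\eps$ from \cref{ass:noise}.

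\emph{Unbiasedness.} Square-integrability of each $\phi_k$ makes $f \in L^2$. Since $\eps \stackrel{d}{=} -\eps$, we have $\E[f(-\eps)] = \E[f(\eps)]$, so $\E[\widehat{D}^{\,\mathrm{anti}}] = \E[f]$. Unbiasedness of $\widehat{D}^{\,\mathrm{iid}}$ is immediate. Both estimators call the denoiser on two noise inputs for each of $k$ and $j$, so their denoiser cost is equal.

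\emph{Variance identities.} By definition $\widehat{D}^{\,\mathrm{anti}} = f_e(\eps)$, so $\Var(\widehat{D}^{\,\mathrm{anti}}) = \Var(f_e)$ holds trivially. For the i.i.d.\ estimator, independence gives $\Var(\widehat{D}^{\,\mathrm{iid}}) = \tfrac12 \Var(f)$. It then suffices to show $\Var(f) = \Var(f_e) + \Var(f_o)$, i.e.\ $\Cov(f_e, f_o) = 0$.

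\emph{Orthogonality.} This is the one step that actually uses symmetry, and I expect it to be the only real obstacle. Because $f_o$ is odd and $\eps$ is symmetric, $\E[f_o] = \E[f_o(-\eps)] = -\E[f_o]$, so $\E[f_o] = 0$. The product $f_e f_o$ is also odd, so $\E[f_e f_o] = 0$ by the same argument; Cauchy--Schwarz ensures this expectation is finite. Hence $\Cov(f_e, f_o) = 0$.

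Combining these gives \cref{eq:anti-var}. The comparison $\Var(f_e) \le \tfrac12(\Var(f_e) + \Var(f_o))$ rearranges directly to $\Var(f_e) \le \Var(f_o)$, which is the stated criterion.

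\emph{Heuristic for the odd part.} Write $\epsth(\sqrt{\abar_t}\vobs + \sqrt{1-\abar_t}\,\eps, \zspan, \ek_k, t) = \bm{a}_k + \sqrt{1-\abar_t}\,J_k\eps + O(\|\eps\|^2)$, where $J_k$ is the Jacobian of the denoiser with respect to its first argument. Expanding $\phi_k = \|\bm{a}_k - \eps + \sqrt{1-\abar_t}\,J_k\eps\|^2$ gives:
\begin{itemize}
\item an even part $\|\bm{a}_k\|^2 + \|\eps\|^2$ plus quadratic terms in $\eps$;
\item an odd part $-2\langle \eps, \bm{a}_k\rangle + 2\sqrt{1-\abar_t}\,\langle \bm{a}_k, J_k\eps\rangle$ plus higher-order odd terms.
\end{itemize}
In the difference $f$, the $\|\eps\|^2$ terms cancel. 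The leading odd term is therefore $-2\langle \eps, \bm{a}_k - \bm{a}_j\rangle$, with Jacobian corrections of size controlled by $\sqrt{1-\abar_t}\,\|J_k\|$. The even part is $\|\bm{a}_k\|^2 - \|\bm{a}_j\|^2$, a constant with zero variance, plus terms quadratic in $\eps$ weighted by $J_k$ and $J_j$.

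At this order, $\Var(f_o) \approx 4\|\bm{a}_k - \bm{a}_j\|^2$, while $\Var(f_e)$ scales with the squared Jacobian norms. This yields the informal criterion that the inter-type gap must dominate the denoiser's local sensitivity. I would present this step as a first-order heuristic rather than a rigorous bound, consistent with how the statement phrases it.
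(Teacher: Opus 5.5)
Your proposal is correct and follows the paper's proof essentially step for step. It uses the same orthogonal even/odd decomposition (your Cauchy--Schwarz remark makes explicit the integrability of $f_e f_o$, which the paper's lemma leaves implicit), the same two variance computations and rearrangement, and the same first-order expansion isolating $-2\langle \eps, \bm{a}_k - \bm{a}_j\rangle$ together with the Jacobian correction $\sqrt{1-\abar_t}\,(J_k^{\top}\bm{a}_k - J_j^{\top}\bm{a}_j)$. One small caveat, affecting only the heuristic part: the $O(\|\eps\|^2)$ remainder $r_k$ of the denoiser expansion also enters $f_e$, through $2\langle \bm{a}_k, r_k(\eps)\rangle - 2\langle \bm{a}_j, r_j(\eps)\rangle$, at the same order in $\eps$ as your Jacobian quadratics. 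So $\Var(f_e)$ depends on the denoiser's curvature as well as on its Jacobian norms, which is why the paper states the condition loosely as the gap dominating the denoiser's ``local sensitivity''.
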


\section{Experiments}
\label{sec:experiments}

Having specified the scorer (\cref{sec:method}) and stated its theoretical properties (\cref{sec:theory}), we ask whether diffusion-based type verification improves MNER over deterministic imagination and whether the ablations match those properties. We evaluate on Twitter-2015 and Twitter-2017; the protocol is summarised in \cref{sec:setup}.

\subsection{Datasets}
\label{sec:datasets}
We evaluate on the two standard Twitter-MNER benchmarks (\cref{tab:datasets}). The text splits follow the canonical UMT release~\citep{yu2020umt}; the accompanying images follow the HVPNeT distribution~\citep{chen2022hvpnet}. The data-preparation pipeline pins the upstream mirror to a fixed commit and verifies a golden SHA-256 of every split file, so any silent drift raises an error rather than changing the numbers.

\begin{table}[ht]
\caption{Datasets used in the protocol. Entity types are \texttt{PER}/\texttt{LOC}/\texttt{ORG}/\texttt{MISC}.}
\label{tab:datasets}
\centering
\begin{tabular}{lrrrr}
\toprule
\textbf{Dataset} & \textbf{Train} & \textbf{Dev} & \textbf{Test} & \textbf{Types} \\
\midrule
Twitter-2015 & 4{,}000 & 1{,}000 & 3{,}257 & 4 \\
Twitter-2017 & 3{,}373 & 723   & 723   & 4 \\
\bottomrule
\end{tabular}
\vspace{-1.5em}
\end{table}

\subsection{Main Results}
\label{sec:main-results}
\begin{table*}[!t]
\vspace{1.5em}
\caption{Main results: span-level strict precision (P), recall (R), and $F_1$ (\%) on the two Twitter-MNER benchmarks. Published baselines are cited from their original papers; a dash marks a value not reported in the source. ImaginE and DiffImaginE (below the rule) are, respectively, our matched deterministic control and our full model, which differ only in the verifier. DiffImaginE (last row) is shown in bold; the best value in each column is likewise in bold.}
\label{tab:main}
\centering
\small
\setlength{\tabcolsep}{5pt}
\begin{tabular}{lcccccc}
\toprule
& \multicolumn{3}{c}{\textbf{Twitter-2015}} & \multicolumn{3}{c}{\textbf{Twitter-2017}} \\
\cmidrule(lr){2-4} \cmidrule(lr){5-7}
\textbf{Method} & \textbf{P} & \textbf{R} & \textbf{F1} & \textbf{P} & \textbf{R} & \textbf{F1} \\
\midrule
UMT~\citep{yu2020umt}                  & 71.67 & 75.23 & 73.41 & 85.28 & 85.34 & 85.31 \\
Co-attention~\citep{zhang2018adaptive} & 69.87 & 74.59 & 72.15 & 85.13 & 83.20 & 84.10 \\
HVPNeT~\citep{chen2022hvpnet}          & 73.87 & 76.82 & 75.32 & 85.84 & 87.93 & 86.87 \\
MAF~\citep{xu2022maf}                  & 71.86 & 75.10 & 73.42 & 86.13 & 86.38 & 86.25 \\
De-Bias~\citep{zhang2023reducing}      & 74.45 & 76.13 & 75.28 & 87.59 & 86.11 & 86.84 \\
SMNER~\citep{zhou2022span}             & 75.34 & 76.81 & 76.06 & \textbf{88.15} & 87.47 & 87.81 \\
VEC-MNER~\citep{wei2024vec}            & 74.56 & 75.23 & 74.89 & 87.42 & 87.61 & 87.51 \\
SEPA~\citep{ding2025sepa}              & 74.71 & 76.84 & 75.76 & 87.50 & 87.05 & 87.27 \\
\midrule
ImaginE (ours, deterministic)          & 76.32 & 74.58 & 75.44 & 86.60 & 88.86 & 87.72 \\
\textbf{DiffImaginE (ours)}            & \textbf{77.41} & \textbf{76.93} & \textbf{77.17} & \textbf{87.72} & \textbf{89.16} & \textbf{88.44} \\
\bottomrule
\end{tabular}
\vspace{-1em}
\end{table*}
\Cref{tab:main} reports strict precision/recall/$F_1$ on both benchmarks for published baselines, the matched ImaginE control, and DiffImaginE. The primary contrast is DiffImaginE vs.\ ImaginE (\cref{sec:baselines}); per-type $F_1$ and a paired test appear in \cref{tab:per-type}.

On Twitter-2015, DiffImaginE improves over ImaginE by $+1.73$ strict $F_1$ ($77.17$ vs.\ $75.44$), driven mainly by higher precision ($77.41$ vs.\ $76.32$) and recall ($76.93$ vs.\ $74.58$). On Twitter-2017, swapping the ImaginE verifier for the diffusion scorer raises strict $F_1$ from $87.72$ to $88.44$ ($+0.72$), with precision rising from $86.60$ to $87.72$ and recall from $88.86$ to $89.16$. The paired per-type test in \cref{tab:per-type} rejects the null at $p=0.032$, so the gain is statistically significant at the $0.05$ level. Because DiffImaginE and ImaginE share the same encoder and training recipe (\cref{sec:baselines}), we attribute the improvement to the verifier. DiffImaginE also reaches the best listed $F_1$ on both datasets ($77.17$ on Twitter-2015 and $88.44$ on Twitter-2017); published baselines use different encoders and fusion designs and are shown for context only. The larger Twitter-2015 margin suggests that diffusion scoring helps most when visual evidence is noisier and single-point imagination is brittle.

\subsection{Per-Type Results}
\label{sec:per-type}
\Cref{tab:per-type} breaks down the Twitter-2017 contrast from \cref{sec:main-results}. The paired test compares DiffImaginE and ImaginE on the same training seeds; it rejects the null at $p=0.032$. Gains concentrate on \texttt{PER} ($92.76$ to $93.91$), where faces provide strong visual cues, and on \texttt{ORG} ($85.44$ to $86.24$), where logos and brand imagery help disambiguation. \texttt{LOC} is unchanged ($87.12$ vs.\ $87.08$), and \texttt{MISC} improves only slightly ($75.06$ to $75.84$), remaining the hardest type because of high visual and lexical diversity. The per-type pattern is consistent with the diffusion scorer exploiting diverse visual evidence rather than a single imagined feature vector.

\begin{table}[ht]
\caption{Per-type strict $F_1$ ($\uparrow$) on Twitter-2017 and the paired DiffImaginE-vs-ImaginE test. DiffImaginE (last row) is shown in bold; the best value per type is likewise in bold.}
\label{tab:per-type}
\centering
\resizebox{\columnwidth}{!}{%
\begin{tabular}{lccccc}
\toprule
\textbf{Method} & \textbf{PER} & \textbf{LOC} & \textbf{ORG} & \textbf{MISC} & \textbf{paired $p$} \\
\midrule
ImaginE (ours, deterministic) & 92.76 & \textbf{87.12} & 85.44 & 75.06 & 0.032 \\
\textbf{DiffImaginE (ours)}   & \textbf{93.91} & \textbf{87.08} & \textbf{86.24} & \textbf{75.84} & \textbf{0.032} \\
\bottomrule
\end{tabular}%
}
\vspace{-1em}
\end{table}

\subsection{Qualitative Case Study}
\label{sec:case-study}
\Cref{fig:case-study} visualises how DiffImaginE maps localised visual evidence to a type decision. The input establishes disambiguating comic context for the person-like mention ``Donald Duck.'' The span--visual attention (SVA) panel answers \emph{where} the model looks: the shared encoder grounds the span in character and comic regions, but attention does not assign a label. The prediction rows hold this evidence constant and isolate the verifier: ImaginE's single imagined-feature match returns \texttt{PER}, whereas DiffImaginE recovers \texttt{MISC}. The five-cell bar answers \emph{which type} the evidence supports. Each cell schematically gives the normalised contribution from one fixed noise level to the aggregated \texttt{MISC}-over-\texttt{PER} margin; darker blue means stronger positive support. Learned weighting combines these probes into a \texttt{MISC}-favouring signal that helps the fused classifier overcome the person-name prior. Together, the panels expose two advantages: multi-noise validation across complementary signal-to-noise regimes and likelihood-based comparison of how well competing type-conditioned denoisers explain the observed latent.

\begin{figure}[!t]
    \centering
    \includegraphics[width=0.84\columnwidth]{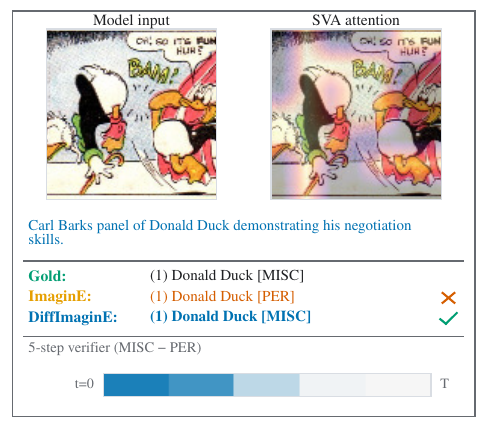}
    \caption{From evidence localisation to type verification. The input supplies disambiguating context, SVA shows where mention-relevant evidence is localised, and the prediction rows contrast single-point ImaginE with DiffImaginE. The five cells schematically show fixed-noise contributions to the \texttt{MISC}-over-\texttt{PER} margin; learned aggregation produces the final verification signal, with darker blue denoting stronger positive support. }
    \label{fig:case-study}
    
\end{figure}

\begin{table}[ht]
\caption{Ablation results on Twitter-2017 (strict $F_1$, $\uparrow$). Each variant is the mean over three seeds (42/43/44). Bold marks the reference configuration (\texttt{main}) and the highest $F_1$ among variants.}
\label{tab:ablation-results}
\centering
\setlength{\tabcolsep}{4pt}
\resizebox{\columnwidth}{!}{%
\begin{tabular}{lc@{\hspace{1.2em}}lc}
\toprule
\textbf{Variant} & \textbf{F1} & \textbf{Variant} & \textbf{F1} \\
\midrule
\textbf{main} & \textbf{88.78} & no\_clf\_diffusion & 88.68 \\
no\_latent\_norm & 88.58 & no\_learnable\_tw & 88.42 \\
g\_fixed\_1 & 88.40 & no\_warmup & 88.39 \\
no\_cfg & 88.38 & no\_l\_ico & 88.21 \\
greedy\_decode & 88.17 & no\_antithetic & 88.14 \\
no\_min\_snr & 88.10 & no\_score\_norm & 88.09 \\
no\_diffusion & 87.71 & & \\
\bottomrule
\end{tabular}%
}

\end{table}

Unless noted, ablation and budget-sweep rows are means over three seeds, whereas \cref{tab:main} reports the single selected operating point (EMA weights at the best-dev threshold); the two are therefore not directly comparable digit-for-digit, which also explains why \texttt{no\_diffusion} ($87.71$) and the ImaginE control in \cref{tab:main} ($87.72$) differ by $0.01$. Removing the diffusion verifier entirely (\texttt{no\_diffusion}) causes the largest drop, from $88.78$ to $87.71$ ($-1.07$ $F_1$), confirming that the gain comes from the scorer rather than the shared stack. Score normalisation ($88.09$), Min-SNR weighting ($88.10$), and the antithetic estimator ($88.14$) each cost about $0.6$ to $0.7$ $F_1$ when removed, matching the roles identified in \cref{sec:disc,sec:theory}; classifier-free guidance ($88.38$), learnable timestep aggregation ($88.42$), and the contrastive score term $\Lico$ ($88.21$) matter less on Twitter-2017. we treat this as seed noise on one dataset and keep both terms for their benefit on Twitter-2015 and in per-type behaviour. The technical appendix reports the same ablation variants on Twitter-2015 over the same seeds, where \texttt{no\_diffusion} again yields the largest drop.

\begin{table}[ht]
\caption{Evaluation-budget sweep on Twitter-2017: strict $F_1$ as a function of the number of evaluation timesteps $N$ and of the diffusion warmup length. The best value in each sweep row is in bold.}
\label{tab:nsweep}
\centering
\small
\setlength{\tabcolsep}{4pt}
\begin{tabular}{lccccc}
\toprule
\textbf{Sweep} & \multicolumn{5}{c}{\textbf{Setting}} \\
\midrule
$N$ (eval timesteps) & 1 & 2 & 5 & 10 & 20 \\
\quad strict $F_1$   & 88.66 & 88.61 & 88.47 & 88.50 & \textbf{88.71} \\
warmup epochs        & 5 & 10 & 20 & & \\
\quad strict $F_1$   & 88.19 & \textbf{88.58} & 88.44 & & \\
\bottomrule
\end{tabular}
\vspace{-1.5em}
\end{table}

The evaluation-budget sweep (\cref{tab:nsweep}) shows that strict $F_1$ is essentially flat in the number of evaluation timesteps (between $88.47$ and $88.71$ for $N$ from $1$ to $20$), so a small Monte-Carlo budget already captures most of the score signal and the verifier can be run cheaply at test time. The warmup-length sweep favours a moderate schedule, with ten warmup epochs ($88.58$) ahead of five ($88.19$); together with the ablation grid, these results trace the empirical gains back to the diffusion scorer and to the training choices analysed in \cref{sec:method,sec:disc,sec:theory}. At inference, verifier cost scales linearly with the number of candidate spans, types scored, and evaluation timesteps; antithetic evaluation doubles the denoiser forwards relative to a single-noise draw but stabilises type ranking, which is why \cref{tab:nsweep} reports similar $F_1$ for small $N$. Seed-controlled mechanism plots in the technical appendix show that gold-type score channels separate entity from non-entity spans, that mid-range noise yields the lowest reconstruction error for the correct type, and that antithetic pairing reduces score variance at fixed forward budget.

\subsection{Experimental Setup}
\label{sec:setup}

All experiments follow a fixed and reproducible protocol.

\paragraph{Baselines.}\label{sec:baselines}
We report two baseline families. \emph{Published baselines} include representative MNER methods: UMT~\citep{yu2020umt}, co-attention~\citep{zhang2018adaptive}, OCSGA~\citep{wu2020ocsga}, RpBERT~\citep{sun2021rpbert}, HVPNeT~\citep{chen2022hvpnet}, ITA~\citep{wang2022ita}, and MAF~\citep{xu2022maf}. Our control baseline, \emph{ImaginE}, is a deterministic reimplementation of imagine-and-compare verification. It shares the encoder stack, auxiliary objectives, classifier head, and regularisation with DiffImaginE, including R-Drop~\citep{liang2021rdrop} and augmentation-aware consistency, and differs only in the verifier. This controlled comparison isolates the effect of diffusion-based scoring.

\paragraph{Metrics.}\label{sec:metrics}
We report span-level strict-match precision, recall, and $F_1$, where a prediction is correct only when both its boundaries and type match the gold annotation. Per-type $F_1$ is also reported, with strict $F_1$ as the primary metric. Evaluation uses a fixed Monte-Carlo seed, making repeated evaluation of the same checkpoint deterministic. Ablations and matched comparisons use training seeds $42/43/44$ and paired tests with multiple-comparison correction. The main table reports the EMA checkpoint and threshold selected on the development set, while ablations and budget analyses report averages across seeds.

\paragraph{Implementation.}\label{sec:implementation}
The model uses RoBERTa-base and CLIP-ViT-B/32 projected to a shared width $d$. The diffusion verifier adopts a cosine schedule with $T=1000$, $M$ stratified training samples, $N$ evaluation timesteps within a selected window, classifier-free dropout $p_{\mathrm{cf}}$, guidance scale $g$, and Min-SNR clip $\gamma$. Training uses AdamW~\citep{loshchilov2019adamw}, separate learning rates for pretrained and newly introduced modules, linear warmup, AMP, EMA, and distributed data parallelism. Antithetic sampling uses paired noise $(\eps,-\eps)$: it preserves the training budget but increases evaluation from $N$ to $2N$ denoiser forwards, and can be disabled for an $N$-forward budget. Diffusion hyperparameters and decoding margins are selected on the development set and fixed for test evaluation; the full search space is provided in the technical appendix.

\section{Related Work}
\label{sec:related}

Prior MNER systems improve encoding and fusion through co-attention~\citep{moon2018multimodal,zhang2018adaptive,lu2018visual}, unified multimodal transformers with span-detection auxiliaries~\citep{yu2020umt}, object-level and relation-propagation attention~\citep{wu2020ocsga,sun2021rpbert}, and hierarchical visual prefixes with alignment objectives~\citep{chen2022hvpnet,wang2022ita,xu2022maf}. DiffImaginE is complementary: it leaves the encoder stack unchanged and modifies only the per-(span, type) verification rule.

A parallel line within MNER replaces the verifier with deterministic imagination: given a span and a type, the model predicts a single visual feature vector and scores agreement with the observation~\citep{chen2022hvpnet,xu2022maf}. This design is fast but cannot represent multimodal visual realisations of a type and offers no likelihood interpretation of the score. Our ImaginE control reimplements this paradigm with the same encoders and losses as DiffImaginE, which makes the diffusion-versus-deterministic comparison in \cref{sec:main-results,tab:ablation-results} a direct test of the verification rule rather than of the surrounding MNER stack.

Diffusion models generate data by iterative denoising~\citep{sohl2015deep,ho2020ddpm,song2019generative,song2021scoresde,nichol2021improved,kingma2021vdm} and support classifier-free guidance~\citep{dhariwal2021classifier,ho2022cfg}. A related line treats class-conditional denoising error as a zero-shot classifier~\citep{li2023diffusionclassifier,clark2023texttoimage}; likelihood-based discriminative objectives and learned timestep weightings recover much of the gap over bare ELBO scores~\citep{xian2024graphclf,jeong2025compositionality}, and antithetic noise pairing reduces estimator variance~\citep{jia2026antithetic}, an effect we formalise in \cref{prop:antithetic}. DiffImaginE applies these ideas to span-conditioned cross-modal latents in structured MNER.

\section{Conclusion}
\label{sec:conclusion}

DiffImaginE recasts multimodal NER type verification as conditional latent diffusion, scoring each type hypothesis by how well a type-conditioned denoiser reconstructs noise injected into span-localised visual evidence. We supervise the scores as classification logits, learn how to aggregate errors across timesteps, and estimate expectations antithetically; two propositions in \cref{sec:theory} justify guidance as posterior sharpening and antithetic pairing as variance reduction at fixed cost under an even/odd criterion. A matched ImaginE control attributes the observed gains to the diffusion verifier rather than to encoder or fusion changes. On Twitter-2015 and Twitter-2017 the diffusion scorer improves strict $F_1$ over this control, with ablations and budget sweeps that trace the benefit to the verifier and to the design choices in \cref{sec:disc}.

Results are limited to short-text, single-image Twitter posts, where the number of evaluation timesteps trades compute for score fidelity and where image quality varies widely. Natural next steps are multi-image or video evidence, open-vocabulary types, and applying the same span-conditioned diffusion verifier to other structured labelling tasks with visual context.

\appendix
\section*{Supplementary Material}
\addcontentsline{toc}{section}{Supplementary Material}
This supplementary material contains the proofs of the two propositions together with the full loss, hyperparameter, and ablation-configuration specifications. Equation, proposition, and assumption numbers refer to the corresponding statements in the main paper.

\section{Proof of Proposition~1 (Guided score is a tempered posterior)}
\label{app:proof-elbo}

\paragraph{Unguided score and Bayes posterior.}
Under the ELBO-consistent weighting assumption, for every type $k$ and span $s$ the expected weighted denoising error equals the negative conditional log-likelihood up to a type-independent constant,
\begin{equation}
\err_k = -\log p_\theta(\vobs \mid \zspan, \ek_k) + C,
\end{equation}
which is the diffusion-classifier identity~\citep{li2023diffusionclassifier,kingma2021vdm}. With a uniform prior $p(\ek_k \mid \zspan) = 1/K$, Bayes' rule gives
\begin{equation}
\begin{aligned}
p_\theta(\ek_k \mid \vobs, \zspan)
&= \frac{p_\theta(\vobs \mid \zspan, \ek_k)}
        {\sum_{j} p_\theta(\vobs \mid \zspan, \ek_j)} \\
&= \frac{\exp(-\err_k + C)}
        {\sum_{j}\exp(-\err_j + C)} \\
&= \softmax_k(\score_k),
\end{aligned}
\end{equation}
since $\score_k = -\err_k$ and the common factor $e^{C}$ cancels. Therefore $\argmax_k \score_k = \argmax_k p_\theta(\ek_k \mid \vobs, \zspan)$ is the Bayes-optimal type, and $\softmax_k(\score_k/\tau)$ is the temperature-$\tau$ posterior.

\paragraph{Guidance as a temperature.}
The guided score satisfies $\score_k^{(g)} = -(1 + g)\,\err_k + g\,\err_{\varnothing}$. The term $g\,\err_{\varnothing}$ is independent of $k$, so it is an additive constant inside the softmax and cancels:
\begin{equation}
\begin{aligned}
\softmax_k\!\bigl(\score_k^{(g)}/\tau\bigr)
&= \softmax_k\!\Bigl(
    \tfrac{-(1+g)\err_k + g\,\err_{\varnothing}}{\tau}
    \Bigr) \\
&= \softmax_k\!\Bigl(\tfrac{-(1+g)\err_k}{\tau}\Bigr) \\
&= \softmax_k\!\Bigl(\tfrac{\score_k}{\tau/(1+g)}\Bigr).
\end{aligned}
\end{equation}
Substituting $\score_k = \log p_\theta(\vobs\mid\zspan,\ek_k) + \text{const}$ and using the uniform prior,
\begin{equation}
\begin{aligned}
\softmax_k\!\bigl(\score_k^{(g)}/\tau\bigr)
&\propto \exp\!\Bigl(
    \tfrac{1+g}{\tau}\log p_\theta(\vobs \mid \zspan, \ek_k)
    \Bigr) \\
&\propto p_\theta(\ek_k \mid \vobs, \zspan)^{(1+g)/\tau}.
\end{aligned}
\end{equation}
For $\tau = 1$ this is $p_\theta(\ek_k\mid\vobs,\zspan)^{1+g}$, a posterior raised to the power $1 + g$. The map is a monotone (rank-preserving) sharpening: it does not change $\argmax_k$, and it makes the distribution strictly more peaked for $g > 0$. $\square$

\section{Proof of Proposition~2 (Antithetic variance reduction)}
\label{app:proof-antithetic}

We first record the even/odd decomposition, then compute the two variances, then justify the leading-order claim.

\begin{lemma}[Orthogonal even/odd decomposition]
\label{lem:evenodd}
Let $\eps \stackrel{d}{=} -\eps$ and let $f$ be square-integrable. With $f_e(\eps) = \tfrac12(f(\eps)+f(-\eps))$ and $f_o(\eps) = \tfrac12(f(\eps)-f(-\eps))$ we have $f = f_e + f_o$, $\E[f] = \E[f_e]$, $\E[f_o] = 0$, $\Cov(f_e, f_o) = 0$, and $\Var(f) = \Var(f_e) + \Var(f_o)$.
\end{lemma}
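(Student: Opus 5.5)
The plan is to verify each claim of \cref{lem:evenodd} directly from the symmetry $\eps \stackrel{d}{=} -\eps$. The identity $f = f_e + f_o$ is immediate by adding the two definitions. Both $f_e$ and $f_o$ are square-integrable, since $f(-\eps)$ has the same law as $f(\eps)$ and is therefore in $L^2$; this makes all the expectations and covariances below well defined.

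For the means, we will use symmetry in the form $\E[h(-\eps)] = \E[h(\eps)]$, valid for any integrable $h$. Taking $h = f$ gives $\E[f(-\eps)] = \E[f(\eps)]$. Hence $\E[f_o] = \tfrac12(\E[f] - \E[f]) = 0$ and $\E[f_e] = \E[f]$.

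The key step is orthogonality. Since $\E[f_o] = 0$, we have $\Cov(f_e, f_o) = \E[f_e f_o]$. The product is
\[
f_e(\eps) f_o(\eps) = \tfrac14\bigl(f(\eps)^2 - f(-\eps)^2\bigr).
\]
This is an odd function of $\eps$, and it is integrable by Cauchy--Schwarz. Applying the symmetry identity with $h = f^2$ shows that its expectation vanishes, so $\Cov(f_e, f_o) = 0$. The variance decomposition then follows from expanding
\[
\Var(f_e + f_o) = \Var(f_e) + \Var(f_o) + 2\Cov(f_e, f_o)
\]
and dropping the zero cross term.

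No step is a real obstacle. The only point needing care is integrability: the product $f(\eps) f(-\eps)$ does not appear in $f_e f_o$, and $f(\eps)^2$ and $f(-\eps)^2$ are integrable, so the cancellation argument is legitimate.
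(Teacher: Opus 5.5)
Your proposal is correct and takes the same route as the paper: symmetry of $\eps$ gives $\E[f_o]=0$, the product $f_e f_o$ is odd so its expectation vanishes, and the variance then splits into the two parts. Writing the product explicitly as $f_e f_o = \tfrac14\bigl(f(\eps)^2 - f(-\eps)^2\bigr)$ and checking integrability is a slightly more careful version of the paper's ``even times odd is odd'' step, and it does not change the argument.
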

\begin{proof}
$f = f_e + f_o$ is immediate. Since $\eps \stackrel{d}{=} -\eps$, $\E[f_o] = \tfrac12(\E[f(\eps)] - \E[f(-\eps)]) = 0$, hence $\E[f] = \E[f_e]$. For the covariance, $f_e f_o$ is an odd function of $\eps$ (a product of an even and an odd function), so $\E[f_e f_o] = 0$ by symmetry, and $\Cov(f_e, f_o) = \E[f_e f_o] - \E[f_e]\E[f_o] = 0$. Orthogonality then gives $\Var(f) = \Var(f_e) + \Var(f_o)$.
\end{proof}

\paragraph{Variances of the two estimators.}
The per-sample type difference $f(\eps) = \phi_k(\eps) - \phi_j(\eps)$ is square-integrable, and both estimators use two denoiser evaluations (the antithetic estimator evaluates $\phi_k, \phi_j$ at $\eps$ and at $-\eps$; the i.i.d.\ estimator at two independent draws). Unbiasedness of both for $\E[f] = \err_k - \err_j$ follows from \cref{lem:evenodd} and linearity. For the antithetic estimator,
\begin{equation}
\begin{aligned}
\widehat{D}^{\,\mathrm{anti}}
&= \tfrac12\bigl(f(\eps) + f(-\eps)\bigr) = f_e(\eps), \\
\Var(\widehat{D}^{\,\mathrm{anti}})
&= \Var(f_e).
\end{aligned}
\end{equation}
For two independent samples $\eps^{(1)}, \eps^{(2)}$,
\begin{equation}
\begin{aligned}
\Var(\widehat{D}^{\,\mathrm{iid}})
&= \Var\!\Bigl(
    \tfrac12\bigl(f(\eps^{(1)}) + f(\eps^{(2)})\bigr)
    \Bigr) \\
&= \tfrac12\Var(f) \\
&= \tfrac12\bigl(\Var(f_e) + \Var(f_o)\bigr),
\end{aligned}
\end{equation}
using independence and \cref{lem:evenodd}. Therefore,
\begin{equation*}
\begin{aligned}
&\Var(\widehat{D}^{\,\mathrm{anti}})
    \le \Var(\widehat{D}^{\,\mathrm{iid}}) \\
&\quad\iff \Var(f_e)
    \le \tfrac12\bigl(\Var(f_e)+\Var(f_o)\bigr) \\
&\quad\iff \Var(f_e) \le \Var(f_o).
\end{aligned}
\end{equation*}

\paragraph{Leading-order dominance of the odd component.}
Write $\phi_k(\eps) = \|\epsth(\cdot, \ek_k)\|_2^2 - 2\langle \eps, \epsth(\cdot, \ek_k)\rangle + \|\eps\|_2^2$. The data-independent term $\|\eps\|_2^2$ is identical across types and cancels in $f = \phi_k - \phi_j$:
\begin{equation}
\begin{aligned}
f(\eps) ={}& \|\epsth(\cdot, \ek_k)\|_2^2
             - \|\epsth(\cdot, \ek_j)\|_2^2 \\
&{}- 2\langle \eps,\,
    \epsth(\cdot, \ek_k) - \epsth(\cdot, \ek_j)\rangle,
\end{aligned}
\end{equation}
where each $\epsth(\cdot, \ek_k) = \epsth(\sqrt{\abar_t}\vobs + \sqrt{1-\abar_t}\,\eps, \zspan, \ek_k, t)$ depends on $\eps$ through its input. Expand the denoiser to first order around $\eps = 0$, $\epsth(\cdot, \ek_k) = \bm{a}_k + \sqrt{1-\abar_t}\,J_k\,\eps + o(\|\eps\|)$, with $\bm{a}_k = \epsth(\sqrt{\abar_t}\vobs, \zspan, \ek_k, t)$ and Jacobian $J_k$. The explicit cross term contributes the odd component $-2\langle \eps, \bm{a}_k - \bm{a}_j\rangle + O(\|\eps\|^2)$. The squared-norm term is not purely even: $\|\epsth(\cdot, \ek_k)\|_2^2 = \|\bm{a}_k\|_2^2 + 2\sqrt{1-\abar_t}\,\langle J_k^{\!\top}\bm{a}_k, \eps\rangle + O(\|\eps\|^2)$, whose first-order part is also odd in $\eps$. Collecting both contributions, the odd part of $f$ is, to first order,
\begin{equation}
f_o(\eps) \approx -2\bigl\langle \eps,\; (\bm{a}_k - \bm{a}_j) - \sqrt{1-\abar_t}\,(J_k^{\!\top}\bm{a}_k - J_j^{\!\top}\bm{a}_j)\bigr\rangle,
\end{equation}
while the quadratic remainders are even to leading order. When the denoiser is well-conditioned, i.e.\ the Jacobian terms are small relative to the inter-type gap $\|\bm{a}_k - \bm{a}_j\|$, the odd part is dominated by $-2\langle \eps, \bm{a}_k - \bm{a}_j\rangle$ and $\Var(f_o) \approx 4\,\|\bm{a}_k - \bm{a}_j\|_2^2$. In this regime $\Var(f_o) > \Var(f_e)$, so by the exact criterion above the antithetic estimator reduces the variance of the type-discriminative score. We stress that the criterion $\Var(f_e) \le \Var(f_o)$ is exact; the first-order analysis only supplies a sufficient condition under which it holds. $\square$

\begin{remark}
The cancellation of $\|\eps\|_2^2$ and the oddness of the cross term are exactly why antithetic pairing helps the \emph{type comparison} more than it would help a single absolute error: the large, type-shared, even fluctuations are differenced away, leaving an odd, type-discriminative term that the antithetic average removes from the variance while preserving the mean.
\end{remark}

\section{Composite Loss Definitions}
\label{app:losses}

The total training objective is the weighted sum
\begin{align}
\mathcal{L} = {}& \Ltask + \alpha_{\mathrm{diff}}\Ldiff
    + \beta\,\Lico + \lambda_{\mathrm{clf}}\Lclf \nonumber \\
&+ \alpha_{\mathrm{rev}}\mathcal{L}_{\mathrm{ira}}^{\mathrm{rev}}
    + \beta_{\mathrm{rev}}\mathcal{L}_{\mathrm{ico}}^{\mathrm{rev}} \nonumber \\
&+ \gamma_{\mathrm{rev}}\mathcal{L}_{\mathrm{sig}}^{\mathrm{rev}}
    + \lambda_{\mathrm{bio}}\mathcal{L}_{\mathrm{bio}} \nonumber \\
&+ \lambda_{\mathrm{xm}}\mathcal{L}_{\mathrm{xmodal}}
    + \lambda_{\mathrm{gr}}\mathcal{L}_{\mathrm{ground}} \nonumber \\
&+ \lambda_{\mathrm{hard}}\mathcal{L}_{\mathrm{hard}}
    + \lambda_{\mathrm{fp}}\mathcal{L}_{\mathrm{fp}}.
\label{eq:total}
\end{align}
\Cref{tab:losses} lists each term, its role, and its default weight. Every term returns a graph-connected zero when it has no contributing spans, which keeps distributed gradient synchronisation intact. \Cref{eq:total} omits the R-Drop regulariser~\citep{liang2021rdrop} used by both DiffImaginE and the deterministic baseline, which is applied as a symmetric-KL consistency between two dropout forward passes (weight $\alpha_{\mathrm{rdrop}} = 0.5$ by default).

\begin{table}[H]
\caption{Composite loss terms. \texttt{O} denotes the non-entity class; ``entity spans'' are spans whose gold label is not \texttt{O}.}
\label{tab:losses}
\centering
\small
\setlength{\tabcolsep}{3pt}
\renewcommand{\arraystretch}{1.03}
\begin{tabular}{@{}
    >{\raggedright\arraybackslash}p{0.18\columnwidth}
    >{\raggedright\arraybackslash}p{0.42\columnwidth}
    >{\raggedright\arraybackslash}p{0.32\columnwidth}@{}}
\toprule
\textbf{Term} & \textbf{Role} & \textbf{Notes / default} \\
\midrule
$\Ltask$ & class-weighted CE on the final $K$-way logits & \texttt{O} down-weighted; online hard-\texttt{O} mining at train time \\
$\Ldiff$ & Min-SNR-weighted denoising score matching & gold type incl.\ \texttt{O}; CFG dropout $p_{\mathrm{cf}}$ \\
$\Lico$ & contrastive CE on per-type scores of entity spans & restricted to the eval timestep window \\
$\Lclf$ & likelihood-based classification on the scores & all $K$ types; learnable $\tau_{\mathrm{clf}}$ \\
$\mathcal{L}_{\mathrm{ira}}^{\mathrm{rev}}$ & reverse-imagination alignment to $\zspan$ & visual-gate weighted \\
$\mathcal{L}_{\mathrm{ico}}^{\mathrm{rev}}$ & reverse-path type contrastive & visual-gate weighted \\
$\mathcal{L}_{\mathrm{sig}}^{\mathrm{rev}}$ & VICReg-style anti-collapse on reverse imagination & prevents constant collapse \\
$\mathcal{L}_{\mathrm{bio}}$ & token-level BIO sequence labelling (shared) & masked CE on first subword \\
$\mathcal{L}_{\mathrm{xmodal}}$ & sentence-level cross-modal InfoNCE (shared) & symmetric, learnable temperature \\
$\mathcal{L}_{\mathrm{ground}}$ & attention-entropy grounding hinge (shared) & entity spans focus on fewer patches \\
$\mathcal{L}_{\mathrm{hard}}$ & hard-negative type margin ranking (shared) & gold vs.\ hardest wrong type \\
$\mathcal{L}_{\mathrm{fp}}$ & symmetric false-positive margin on \texttt{O} spans & mirror of $\mathcal{L}_{\mathrm{hard}}$; precision \\
\bottomrule
\end{tabular}
\end{table}

\clearpage
\onecolumn
\noindent
\begin{minipage}[t]{0.48\textwidth}
\section{Hyperparameters and Search Ranges}
\label{app:hyper}

\Cref{tab:hyper} lists the default hyperparameters and the ranges searched on the development set. The deterministic baseline shares every shared-stack and regularisation hyperparameter; only the verifier-specific diffusion hyperparameters are unique to DiffImaginE.

\captionsetup{hypcap=false}
\captionof{table}{Default hyperparameters and development-set search ranges.}
\label{tab:hyper}
\centering
\small
\setlength{\tabcolsep}{3pt}
\renewcommand{\arraystretch}{1.03}
\begin{tabular}{@{}
    >{\raggedright\arraybackslash}p{0.22\linewidth}
    >{\raggedright\arraybackslash}p{0.28\linewidth}
    >{\centering\arraybackslash}p{0.18\linewidth}
    >{\centering\arraybackslash}p{0.24\linewidth}@{}}
\toprule
\textbf{Group} & \textbf{Parameter} & \textbf{Default} & \textbf{Range} \\
\midrule
\multirow{3}{*}{Encoders} & text encoder & RoBERTa-base & \{base, large\} \\
                          & vision encoder & CLIP-ViT-B/32 & \{B/32, L/14\} \\
                          & shared width $d$ & 256 & \{256, 384, 512\} \\
\midrule
\multirow{4}{*}{Diffusion} & timesteps $T$ & 1000 & -- \\
                           & eval timesteps $N$ & 5 & \{1, 2, 5, 10, 20\} \\
                           & train MC steps $M$ & 4 & \{2, 4\} \\
                           & Min-SNR $\gamma$ & 5.0 & \{1, 5, $\infty$\} \\
\midrule
\multirow{3}{*}{Guidance} & CFG dropout $p_{\mathrm{cf}}$ & 0.15 & [0.0, 0.3] \\
                          & guidance scale $g$ & dev-tuned & \{1, 2, 3, 4\} \\
                          & $F_\beta$ for selection & 1.0 & \{0.5, 1.0\} \\
\midrule
\multirow{4}{*}{Warmup / loss} & warmup epochs & 10 & \{5, 10, 20\} \\
                               & $\alpha_{\mathrm{diff}}$ & 1.0 & \{0.1, 1.0, 2.0\} \\
                               & $\beta$ ($\Lico$) & 0.5 & [0.0, 1.0] \\
                               & $\lambda_{\mathrm{clf}}$ & 0.5 & \{0.1, 0.5, 1.0\} \\
\midrule
\multirow{3}{*}{Optimisation} & encoder LR & 2e-5 & -- \\
                              & new-module LR & 1e-4 & -- \\
                              & EMA decay & 0.999 & -- \\
\bottomrule
\end{tabular}
\end{minipage}
\hfill
\begin{minipage}[t]{0.48\textwidth}
\section{Ablation Configurations}
\label{app:ablation}

\Cref{tab:ablation-config} describes the core ablation variants referenced in the main paper and the hypothesis each one tests. The full grid contains 54 variants: the 15 core variants listed below plus extended variants drawn from the evaluation-budget, loss-weight, auxiliary-objective, precision-calibration, and architecture sweeps; the complete variant registry, with each variant's command-line configuration, is released with the code, and every variant is run over multiple seeds on both datasets.

\captionsetup{hypcap=false}
\captionof{table}{Core ablation variants and the hypothesis each tests.}
\label{tab:ablation-config}
\centering
\small
\setlength{\tabcolsep}{4pt}
\renewcommand{\arraystretch}{1.03}
\begin{tabular}{@{}
    >{\raggedright\arraybackslash}p{0.30\linewidth}
    >{\raggedright\arraybackslash}p{0.63\linewidth}@{}}
\toprule
\textbf{Variant} & \textbf{Hypothesis tested} \\
\midrule
main                & Reference DiffImaginE configuration. \\
no\_diffusion       & Deterministic ImaginE verifier: is the diffusion paradigm necessary? \\
no\_cfg             & Remove classifier-free guidance ($g = 0$, no dropout). \\
no\_min\_snr        & Uniform denoising weight instead of Min-SNR. \\
no\_latent\_norm    & Diffuse on the raw, unstandardised latent. \\
no\_score\_norm     & Remove the score LayerNorm + learnable temperature. \\
no\_l\_diff         & Drop the denoising score-matching term. \\
no\_l\_ico          & Drop the windowed contrastive term. \\
no\_clf\_diffusion  & Drop the likelihood-based classification objective $\Lclf$. \\
no\_warmup          & Skip the denoiser warmup phase. \\
no\_train\_on\_o    & Supervise the denoiser on entity spans only. \\
no\_learnable\_tw   & Uniform-mean timestep aggregation. \\
no\_antithetic      & I.i.d.\ Monte-Carlo noise instead of antithetic pairs. \\
greedy\_decode      & Greedy non-overlap decoding instead of the DP decoder. \\
g\_fixed\_1         & Pin $g = 1$ instead of dev-tuning it. \\
\bottomrule
\end{tabular}
\end{minipage}
\clearpage
\twocolumn

\clearpage
{\small
\bibliographystyle{plainnat}
\bibliography{references}
}

\end{document}